\setlist{nolistsep}
\setlist[itemize]{noitemsep, topsep=0pt}
\newcommand{\mycaptionof}[2]{\captionof{#1}{#2}}
\newcommand{\noaistats}[1]{}
\newcommand{\SUB}[1]{\ENSURE \hspace{-0.15in} \textbf{#1}}
\newcommand{\algfont}[1]{\texttt{#1}}
\newcommand{\fedua}{\algfont{FedUA}\xspace}
\newcommand{\fedaws}{\algfont{FedAwS}\xspace}
\newcommand{\fedavg}{\algfont{FedAvg}\xspace}
\newcommand{\lepochs}{\ensuremath{E}}  %
\newcommand{\grad}{\triangledown}
\newtheorem{lemma}{Lemma}
\begin{document}

%%%%%%%%%%%%%%%%%%%%%%%%%%%%%%%%%%%%%%%%%%%%%%%%%%%%%%%%%%%%%%%%%%%

\title{Federated Learning of User Authentication Models \vspace{.25cm} }

\author{
Hossein Hosseini\thanks{Qualcomm AI Research, an initiative of Qualcomm Technologies, Inc.}
\and
Sungrack Yun\footnotemark[1]
\and
Hyunsin Park\footnotemark[1]
\and
Christos Louizos\footnotemark[1]
\and
Joseph Soriaga\footnotemark[1]
\and
Max Welling\footnotemark[1]
\and
\texttt{\small \{hhossein,sungrack,hyunsinp,clouizos,jsoriaga,mwelling\}@qti.qualcomm.com}
}

\date{}

%%%%%%%%%%%%%%%%%%%%%%%%%%%%%%%%%%%%%%%%%%%%%%%%%%%%%%%%%%%%%%%%%%%

\maketitle

\begin{abstract}

Machine learning-based User Authentication (UA) models have been widely deployed in smart devices. UA models are trained to map input data of different users to highly separable embedding vectors, which are then used to accept or reject new inputs at test time. 
Training UA models requires having direct access to the raw inputs and embedding vectors of users, both of which are privacy-sensitive information. 
In this paper, we propose Federated User Authentication (\fedua), a framework for privacy-preserving training of UA models. 
\fedua adopts federated learning framework to enable a group of users to jointly train a model without sharing the raw inputs. It also allows users to generate their embeddings as random binary vectors, so that, unlike the existing approach of constructing the spread out embeddings by the server, the embedding vectors are kept private as well. 
We show our method is privacy-preserving, scalable with number of users, and allows new users to be added to training without changing the output layer. Our experimental results on the VoxCeleb dataset for speaker verification shows our method reliably rejects data of unseen users at very high true positive rates. 

\end{abstract}

\section{Introduction}

There has been a recent increase in research and development of User Authentication (UA) models with various modalities such as voice~\citep{snyder2017deep,yun2019end}, face~\citep{wang2018AMS_fv}, fingerprint~\citep{cao2018finger}, or iris~\citep{nguyen2017iris}. Many commercial smart devices such as mobile phones, AI speakers and automotive infotainment systems have adopted machine learning-based UA features for unlocking the system or providing a user-specific service, e.g., music recommendation, schedule notification, or other configuration adjustments. 

User authentication is a decision problem where a test input is accepted or rejected based on its similarity to user's training inputs. The similarity is often computed in an embedding space, i.e., if the predicted embedding of the test input is close to the reference embedding, the input will be accepted, and otherwise rejected. 
Authentication models need to be trained with a large variety of users' data so that the model learns different data characteristics and can reliably reject imposters.
However, due to the privacy-sensitivity of both the raw inputs and the user embeddings, it is not possible to centrally collect users' data to train the model. 
Protecting data privacy is particularly important in UA applications, since the model is likely to be trained and tested in adversarial settings. 
Specifically, leakage of embedding makes the authentication model vulnerable to both training- and inference-time attacks, e.g., poisoning~\citep{biggio2012poisoning} and evasion attacks~\citep{biggio2013evasion,szegedy2013intriguing}.

Federated learning (FL) is a framework for training machine learning models with the local data of users by repeatedly communicating the model weights and gradients between a server and a group of users~\citep{fedavg-mcmahan2017}. FL enables training models without users having to share their data with the server or other users and, hence, is a natural solution for training UA models. Training UA models in the federated setting, however, poses unique challenges described in the following. 

In federated learning of supervised models, typically it is assumed that users have access to pairs of inputs and outputs. In most cases, for any given input, the output is naturally derived from user interactions or can be easily obtained. For example, in the next-word prediction task, the output is simply the next word typed by the user~\citep{hard2018federated}. 
In distributed training of UA models, however, the embeddings are not pre-defined. 
Moreover, even when users know their own embeddings, they need to have access to the embeddings of other users, so that the model can be trained to assign predicted embeddings to be not only close to the reference one, but also far away from other embeddings. 

In this paper, we propose Federated User Authentication (\fedua), a scalable and privacy-preserving framework for training UA models. Our contributions are summarized in the following. 
\begin{itemize}[itemsep=4pt]%,leftmargin=0.5cm
    \item We develop a new approach for UA, where instead of learning the spread out embeddings, users {\it construct} the embeddings with high expected minimum separability. We propose to use random binary vectors, with the length of the vectors being determined by the server such that the minimum distance between embeddings is more than a pre-determined value with high probability. 
    Each user then trains the model to maximize the correlation of the model outputs with their embedding vector. After training, a test input is accepted if the distance of the predicted embedding to the reference one is less than a threshold, and otherwise rejected. We develop a ``warm-up phase" to determine the threshold independently for each user, in which a set of inputs are collected and then the threshold is computed so as to obtain a desired True Positive Rate (TPR). 
    
    \item We show our framework is privacy-preserving and addresses the security problems of existing approaches where embeddings are shared with other users or the server~\citep{yu2020federated}. Moreover, we show that using random binary embeddings enables training the UA models with significantly smaller output size than the number of users, and also allows new users to be added to training after training started without the need to change the output layer. 
    Finally, our method has the advantage that no extra coordination is needed among users or between the users and the server apart from the communications done usually in the FL setting. 
    
    \item We present experimental results of our method on VoxCeleb dataset~\citep{Nagrani2017voxceleb} for speaker verification. We train the models with the speech data of a subset of users ($658$ out of $1,251$ users) and evaluate the authentication performance on the data of the rest of users. 
    We show the models trained in the federated setting achieve high TPR at very low False Positive Rates (FPR) with different lengths of embedding vectors, $n_e$. For example, with a TPR of $80\%$, we obtained FPRs of $0.27\%, 0.18\%$ and $0.16\%$ on data of new users for $n_e=128, 256$ and $512$, respectively.
\end{itemize}

\section{Background}
In this section, we provide a background on training classifiers with Federated Learning (FL) and also machine learning-based User Authentication (UA) models. 

\subsection{Federated Supervised Learning}
Consider a setting where a set $U=\{u_1, \cdots, u_n\}$ of $n$ users want to train a supervised model $F_w$ on their data. 
In FL, a server coordinates with the users to train a model in a privacy-preserving way, i.e., the data of each user will not be shared with the server or other users. Several methods have been proposed for training classifiers in the federated setting~\citep{kairouz2019advances}. The widely-used Federated Averaging framework, also called \fedavg, is described in Algorithm~(\ref{alg:fedavg})~\citep{fedavg-mcmahan2017}.

\begin{algorithm}[t]
\begin{algorithmic}
 \SUB{FedAvg:}
   \STATE {\bf Server:} Initialize $w_0$
   \STATE {\bf Server:} $m \leftarrow \min(c\cdot n, 1)$
   \FOR{each global round $t = 1, 2, \dots$}
     \STATE {\bf Server:} $S_t \leftarrow$ (random set of $m$ users)
     \STATE {\bf Server:} Send $w_{t-1}$ to users $u \in S_t$
     \STATE {\bf Users $u \in S_t$:} $w_{t}^u, n_{s,u} \leftarrow \text{UserUpdate}(w_{t-1}, D_u)$ 
     \STATE {\bf Server:} $w_{t} \leftarrow  \frac{\sum_{u \in S_t} n_{s,u} w_{t}^u}{\sum_{u \in S_t} n_{s,u}}$
   \ENDFOR
   \STATE

 \SUB{UserUpdate($w, D$):}\ \ \  // \emph{Done by users} 
  \STATE $\mathcal{B} \leftarrow$ (split $D$ into batches of size $B$)
  \FOR{each local epoch $i$ from $1$ to $\lepochs$}
    \FOR{batch $b \in \mathcal{B}$}
      \STATE $w \leftarrow w - \eta \grad \ell(w; b)$
    \ENDFOR
 \ENDFOR
 \STATE return $w$ and $|D|$ to server
\end{algorithmic}
\mycaptionof{algorithm}{\citep{fedavg-mcmahan2017} \fedavg. 
  $n$ is the number of users, $c$ is the fraction of users selected for each  round, and $D_u$ is the dataset of user $u$ with $n_{s,u}$ samples.}\label{alg:fedavg}
\end{algorithm}

\subsection{User Authentication with Machine Learning}
User authentication is a decision problem where a test input is accepted (reference user) or rejected (imposter user) based on the characteristics of  input data. The authentication is done by comparing an error value with a threshold $\tau$ as:
\begin{eqnarray}\label{eq:sim_score}
d({\bf x}_{\texttt{ref}}, x') \underset{\tiny \texttt{accept}}{\overset{\tiny \texttt{reject}}{\gtrless}} \tau,
\end{eqnarray}
where $d$ is a distance function, ${\bf x}_{\texttt{ref}}$ is the set of training inputs and $x'$ is the test sample. 

The distance is usually computed in an embedding space. Let ${\bf x}=\{x_{i}\}$ and ${\bf y}=\{y_{i}\}, i\in\{1,...,n\},$ be the set of inputs and embedding vectors, respectively, where ${x}_{i}=\{x_{ij}\}, j\in\{1,...,n_{s,i}\}$, and $n_{s,i}$ is the number of training samples of user $i$. 
The UA model $F_w$ with parameters $w$ is trained to minimize the distance of the output of the model on $x_{ij}$ with the embedding vector $y_i$, and maximize the distance to other embeddings $y_k, k\neq i$. 
The loss function is defined as follows:
\begin{align}\label{eq:loss}
&\ell({\bf x},{\bf y};w)=\sum_i {\ell(x_{i},y_i;w)}, \text{ where} \hspace{0.2cm} \ell(x_{i},y_i;w)= \frac{1}{n_{s,i}}\sum_j{\bigg(d(y_i,F_{w}(x_{ij}))-\lambda \sum_{k\notin i}{d(y_k,F_{w}(x_{ij}))}\bigg)}.
\end{align}
At test time and for user $i$, a sample $x'$ is accepted if $d(y_i,F_w(x'))\leq \tau$.     
\section{User Authentication with Federated Learning}\label{sec:fedUA}
In this section, we first outline the privacy requirements of UA applications and then review the challenges of training UA models in the federated setting. 

\subsection{Problem Statement}\label{sec:problem_statement}
Authentication models need to be trained with a large variety of users' data so that the model learns different data characteristics and can reliably authenticate users. For example, speaker recognition models need to be trained with the speech data of users with different ages, genders, accents, etc., to be able to reject imposters with high accuracy. 
One approach for training UA models is that a server collects the data of the users and trains the model centrally. This approach, however, is not privacy-preserving due to the need of having direct access to the personal data of the users. Protecting data privacy is particularly important in UA applications, where the model is likely to be trained and tested in adversarial settings. 

In UA models, both the raw inputs and the embedding vectors are considered sensitive information. 
Specifically, sharing the raw inputs with the server, aside from exposing the user's identity, e.g., voice or face attributes, makes the model vulnerable to test-time attacks, e.g., by authenticating copies of the original inputs. 
The embedding vector also needs to be kept private since it is used to authenticate a user. Leakage of the embedding vector makes the authentication model vulnerable to both training- and test-time attacks as explained in the following. 
\begin{itemize}[itemsep=4pt]
    \item {\bf Poisoning attack}~\citep{biggio2012poisoning}: The server, in addition to the users' data, trains the model with data $(x^*, y_t)$ for target user $u_t$. At test time, the model outputs $y_t$ when queried with $x^*$ and thus wrongly authenticates $x^*$ as a true sample from user $u_t$. 
    \item {\bf Evasion attack}~\citep{biggio2013evasion,szegedy2013intriguing}: Attacks based on adversarial examples are known to be highly effective against deep neural networks~\citep{carlini2017towards}. In the context of UA models, when a target embedding vector is known, an evasion attack can be performed to slightly perturb any given input such that the predicted embedding matches a target embedding and thus is accepted by the model. 
\end{itemize}

\subsection{Challenges}
An alternative approach is using the FL framework, which enables training with data of a large number of users while keeping their data private by design. Training UA models in the FL setting, however, poses its own challenges described in the following. 

\noindent{\bf Problem (1).} 
In distributed training of UA models, the embedding vectors of users are not pre-defined. One approach to define embeddings is that the server assigns a unique ID to each user. Thus, user $i$ trains the model with pairs of $(x_{ij},U_i)$, where $U_i$ is the corresponding one-hot representation of the user ID. This approach, however, has the following drawbacks:
\begin{itemize}[itemsep=4pt]
    \item It is not privacy-preserving as the server knows the embedding vectors of users, which makes the model vulnerable against both training- and test-time attacks.
    \item It is not scalable because the size of the network output will be equal to the number of users. This is a major drawback especially in the FL setting because 1) model weights and gradients must be communicated many times between the server and the users, and 2) training and inference are usually done on resource-constrained local devices. 
    \item The number of participants needs to be known beforehand. In typical FL settings, new users might join after training starts, hence the model design must allow for various numbers of users. However, with one-hot output encoding, the output length must be set before training and cannot be increased after training starts. 
\end{itemize}

\noindent{\bf Problem (2).} 
Even when each user knows its own embedding, they need to have access to embedding vectors of other users in order to train the model with the loss function defined in Equ.~(\ref{eq:loss}). Due to privacy constraints, however, embeddings cannot be shared with other users or the server.

\subsection{Related work: Federated Averaging with Spreadout (\fedaws)}
The loss function defined in Equation~(\ref{eq:loss}) causes the UA model to cluster training data such that the data of each user are placed near its corresponding embedding and far away from other embeddings. A recent paper~\citep{yu2020federated} observed that, alternatively, the model could be trained to maximize the pairwise distances between different embeddings. They then proposed Federated Averaging with Spreadout (\fedaws) framework, where the server, in addition to federated averaging, performs an  optimization step on the embedding vectors to ensure that different embeddings are separated from each other by at least a margin of $\nu$. In particular, in each round of training, the server applies the following geometric regularization:
\begin{align}
\mathrm{reg}_{\mathrm{sp}}({\bf y}) = \sum_{u\in \{1,\cdots,n\}} {\sum_{u\neq u'}{(\max(0,\nu-d(y_u,y_{u'})))^2}}.
\end{align}

\fedaws solves the problem of sharing embedding vectors with other users, but still requires sharing embeddings with the server, which seriously undermines the performance of UA models in adversarial settings, specifically against poisoning and evasion attacks as explained in~\ref{sec:problem_statement}. 
Hence, the question is how can we maximize the pairwise distances between embeddings in a privacy-preserving way? In the next section, we present our framework for addressing the challenges of training UA models in the federated setting. 
         
\section{Proposed Method}

There are two main requirements for training UA models, 1) from the performance perspective, the embedding vectors must be highly separable~\citep{yu2020federated}, and 2) the training method must be privacy-preserving, i.e., the raw inputs or the embedding vector may not be shared with other entities participating in training. In the following, we present Federated User Authentication (\fedua) framework and describe its properties. 

\subsection{Federated User Authentication (\fedua)}
We adopt the FL framework for training UA models since it is a natural choice for training machine learning models on the data of a large number of users without having direct access to the raw inputs. Moreover, in our proposal, users train the model with random embedding vectors generated prior to the training. 
We show the proposed method is privacy-preserving and also provides a high degree of separability between the embedding vectors. 

\noindent{\bf Training.} 
Let $n_e$ be the length of the embedding vector. We propose to use random binary vectors as embeddings, i.e., $y_{ik}\sim \mathrm{Ber}(p=0.5)$, where $y_{ik}$ is the $k$-th element of the embedding vector of user $i$ and $\mathrm{Ber}(p)$ is a Bernoulli distribution with probability $p$. In experiments, we observed that a Bernoulli distribution performs better than other choices of generating random vectors. 
A model with $n_e$ binary outputs can be interpreted as an ensemble of $n_e$ binary classifiers where each classifier independently splits users into roughly two equal groups. The length of the embedding vector is determined by the server such that the generated random vectors are sufficiently separable. We provide probabilistic lower bounds on the minimum distance of the embedding vectors of length $n_e$ as a function of the number of users $n$.

For training, the output vector of the model is passed through a sigmoid layer to generate predicted embeddings $\hat{y}$ in the range of $[0,1]$. The user $i$ trains the model to maximize the correlation of the predicted and true embeddings using the following loss function:
\begin{align}\label{eq:loss_bin}
&\ell({x_i},{y_i};w)=-\frac{1}{n_{s,i}}(2y_i-1)^T \sum_j F_w(x_{ij}).
\end{align}
The loss function is designed so as to encourage the predicted embedding vector to be high where $y_{ij}=1$ and similarly to be low where $y_{ij}=0$.
% {\\ \color{blue}sungrack: in the equation, $T$ is defined before?}

\noindent{\bf Authentication.} 
After training, each user deploys the model as a binary classifier to accept or reject a test sample. For an input $x'$, the authentication is done by comparing the distance of the predicted embedding $\hat{y}=F(x')$ to the reference embedding $y$ with a threshold $\tau$ as follows:
\begin{align}
d(y, \hat{y}) \underset{\tiny \texttt{accept}}{\overset{\tiny \texttt{reject}}{\gtrless}} \tau,
\end{align}
where $d(y, \hat{y})= \|y-\hat{y}\|_2^2$.
The threshold is determined by each user separately in a “warm-up phase,” such that the True Positive Rate (TPR) is more than a value, say $q=90\%$. The TPR is defined as the rate that the reference user is correctly authenticated. In the warm-up phase, $k$ user inputs $x'_j, j=\{1,\cdots,k\}$, are collected and the corresponding distances to reference embedding, $d_j= \|y-F(x'_j)\|_2^2$, are computed. The threshold is then set such that a desired fraction $q$ of inputs are authenticated. 
Our proposed \fedua framework is described in Algorithm~(\ref{alg:fedua}).

\begin{algorithm}[t]
\begin{algorithmic}
 \SUB{Training:}
   \STATE {\bf Server:} Determine length of embedding vectors, $n_e$.
   \STATE {\bf Server:} Send $n_e$ to all users
   \STATE {\bf Each user:} Generate a random binary vector of length $n_e$ as embedding vector
   \STATE {\bf Server and users:} $F\leftarrow$ \algfont{FedAvg}$(D_u)$, $u\in\{1,\cdots,n\}$ 
   \STATE Return $F$
   \STATE

 \SUB{WarmUpPhase($F, y, r$):}\ \ \  // \emph{Done by users} 
  \STATE Collect inputs $x'_j, j\in\{1,\cdots,k\}$
  \STATE Compute $d_j= \|y-F(x'_j)\|_2^2, j\in\{1,\cdots,k\}$
  \STATE Set $\tau$ equal to the $i$-th smallest value in $d$ where $i=\lfloor k\cdot r \rfloor$
  \STATE Return $\tau$
  \STATE
  
 \SUB{Authentication($F, y, \tau, x'$):}\ \ \  // \emph{Done by users} 
  \STATE $e\leftarrow \|y-F(x')\|_2^2$
  \IF{$e\leq \tau$} 
    \STATE Return \textsc{Accept}
  \ELSE 
    \STATE Return \textsc{Reject}
  \ENDIF
  
\end{algorithmic}
\mycaptionof{algorithm} {Federated User Authentication (\fedua). 
  $n$ is the number of users, $D_u$ is the dataset of user $u$, $y$ is the reference embedding, $F$ is the trained model, $r$ is the TPR, $x'$ is a test sample, and \fedavg is described in Algorithm~(\ref{alg:fedavg}).}\label{alg:fedua}
\end{algorithm}

\subsection{Analysis of \fedua}

\noindent{\bf Minimum distance between embeddings.} 
The following Lemma provides a probabilistic lower bound on $d_{\min}$. 

\begin{lemma}
Let $n$ be the number of users and $n_e$ be the length of the embeddings. Let also $d_{\min}$ be the minimum Hamming distance between all embedding vectors. We have:
\begin{align}\label{eq:LB}
\Pr(d_{\min} \geq \tau) \geq \Pi_{k=0}^{n-1}(1-\frac{k\cdot V_{\tau}}{2^{n_e}}),
\end{align}
where $V_{\tau}=\sum_{d=0}^{\tau-1}{n_e \choose d}$. 
\end{lemma}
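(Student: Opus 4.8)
The plan is to reveal the $n$ embedding vectors one at a time and bound, at each step, the probability that the newly revealed vector stays at Hamming distance at least $\tau$ from all previously revealed ones; multiplying these conditional probabilities via the chain rule then yields the stated product. First I would record the two structural facts that drive the argument. Since each coordinate is drawn independently from $\mathrm{Ber}(0.5)$, each embedding $y_i$ is uniform on the cube $\{0,1\}^{n_e}$, and the $n$ embeddings are mutually independent. Moreover, $V_\tau=\sum_{d=0}^{\tau-1}\binom{n_e}{d}$ is exactly the number of points of $\{0,1\}^{n_e}$ lying within Hamming distance $\tau-1$ of any fixed center, i.e.\ the volume of a Hamming ball of radius $\tau-1$. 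The event $\{d_{\min}\geq\tau\}$ is precisely the event that no two embeddings lie within distance $\tau-1$ of each other.

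For the per-step bound, let $A_k$ denote the event that $y_1,\dots,y_k$ are pairwise at Hamming distance at least $\tau$, with $A_0$ the whole space. Fix any configuration of $y_1,\dots,y_k$ consistent with $A_k$. Because the embeddings are independent, conditioning on this configuration leaves $y_{k+1}$ uniform on $\{0,1\}^{n_e}$. The vector $y_{k+1}$ violates the separation requirement exactly when it falls into the union of the $k$ Hamming balls of radius $\tau-1$ centered at $y_1,\dots,y_k$. By the union bound this ``forbidden'' set contains at most $k\,V_\tau$ points, so
\begin{align}
\Pr\big(y_{k+1}\text{ is }\tau\text{-separated from }y_1,\dots,y_k \mid y_1,\dots,y_k\big)\ \geq\ 1-\frac{k\,V_\tau}{2^{n_e}}.
\end{align}
The key point is that this lower bound does not depend on which configuration we fixed, so it survives averaging over the conditional law of $(y_1,\dots,y_k)$ given $A_k$; hence $\Pr(A_{k+1}\mid A_k)\geq 1-kV_\tau/2^{n_e}$.

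Finally I would chain these estimates. Writing $\Pr(d_{\min}\geq\tau)=\Pr(A_n)=\prod_{k=1}^{n}\Pr(A_k\mid A_{k-1})$ and substituting the per-step bound (the first factor, for the lone vector $y_1$, being $1$) gives
\begin{align}
\Pr(d_{\min}\geq\tau)\ \geq\ \prod_{k=0}^{n-1}\Big(1-\frac{k\,V_\tau}{2^{n_e}}\Big),
\end{align}
as claimed.

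The argument is essentially a probabilistic sphere-packing / union-bound count, so the hard part is conceptual rather than computational: one must justify that conditioning on an event depending only on $y_1,\dots,y_k$ does not disturb the uniformity and independence of $y_{k+1}$, and that the per-step bound being uniform over all admissible configurations lets it pass cleanly through the conditional expectation. The only inequality used is the union bound, whose slack comes from overlaps among the Hamming balls and only strengthens the conclusion. A minor caveat is that when $kV_\tau\geq 2^{n_e}$ some factors turn negative and the bound becomes vacuous; this is harmless, since in that degenerate regime the right-hand side is non-positive while the left-hand side is a genuine probability, and the interesting regime is $(n-1)V_\tau<2^{n_e}$, where every factor is positive.
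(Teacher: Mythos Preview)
Your argument is correct and is essentially the same as the paper's: both reveal the embeddings sequentially, bound the forbidden region for the $(k+1)$-st vector by the union of $k$ Hamming balls of radius $\tau-1$ (volume at most $kV_\tau$), and multiply the resulting conditional probabilities. Your write-up is in fact more careful than the paper's, spelling out the independence/uniformity justification for the conditioning step and noting the degenerate regime where factors become non-positive.
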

\begin{proof}
Note that $V_{\tau}$ is the number of vectors with distance less than $\tau$ to a given vector. We prove the lemma by induction. For $n=1$, Equ.~(\ref{eq:LB}) trivially holds. Assume it also holds for $m$. The probability that a new vector can be added such that $d_{\min}$ will not decrease is $1-\frac{\hat{V}}{2^{n_e}}$, where $\hat{V}$ is the space occupied by Hamming spheres with radius $\tau-1$ of previous $m$ vectors. 
We have $\hat{V} \leq m\cdot V_{\tau}$, where $m\cdot V_{\tau}$ is the overall occupied space assuming that previous Hamming spheres are disjoint. This completes the proof.
\end{proof}

Using Equ.~(\ref{eq:LB}), for a given number of users and desired minimum distance $d_{\min}$, the server can obtain $n_e$ such that the minimum distance of random embedding vectors is more than $\tau$ with probability of at least $q$.

\noindent{\bf Practical advantages.} 
Using random binary embeddings enables training UA models with significantly smaller output size compared to the one-hot encoding and thus scales to a larger number of users. Moreover, our framework allows new users to be added to training after training started without the need to change the output layer. Although adding new users causes the effective pairwise minimum distance between embedding vectors to decrease, it helps the model performance by training with data of more users. Furthermore, the drop in minimum distance will not be significant either. As an example, assume $n_e=512$ and $n=1000, 2000$ and $5000$. The median values of $d_{\min}$ over $10$ experiments are $d_{\min}=202, 199$ and $196$, respectively, implying that even doubling the of number users only slightly reduces the minimum distance. 
Finally, generating embeddings randomly does not need any coordination among the users or between the users and the server, apart from the communications done usually in the FL setting.

\noindent{\bf Security analysis.} 
In our proposed framework, neither raw inputs nor the embeddings will be shared with the server or other users, which makes the model robust against poisoning and evasion attacks. 
Our method, however, inherits potential privacy leakage of FL methods, where users' data might be recovered from a trained model or the gradients~\citep{melis2019exploiting}. It has been suggested that adding noise to gradients or using secure aggregation methods improve the privacy of FL~\citep{mcmahan2017learning,bonawitz2017practical}. Such approaches can be applied to our framework too. 
       
\section{Related Work}
FL has been used in a variety of applications, such as mobile keyboard prediction~\citep{hard2018federated,yang2018applied}, keyword detection~\citep{leroy2019federated}, medical applications~\citep{brisimi2018federated} and wireless communications~\citep{niknam2019federated}. 
Apple has also said to use FL for the vocal classifier for ``Hey Siri"~\citep{Apple2019}, the details of which however have not been published. 
To the best of our knowledge, our work is the first to explore using FL for privacy-preserving training of UA models. 

Our approach of assigning a random binary vector to each user is related to distributed output representation~\citep{sejnowski1987parallel}, where a binary function is learned for each bit position. It follows~\citep{hinton1986learning} in that functions are chosen to be meaningful and independent, so that each combination of concepts can be represented by a unique representation. Another related method is distributed output coding~\citep{dietterich1991error,dietterich1994solving}, which uses error-correcting codes (ECCs) to improve the generalization performance of classifiers, with the codes constructed such that the length of codewords is greater or equal to number of classes. We, however, use random binary vectors as user embeddings to enable privacy-preserving training of UA models in the federated setting. Moreover, we propose to generate vectors of length much smaller than the number of users to improve the scalability of the method to large number of users.

\section{Experimental Results}
In this section, we first describe the dataset, network and the training setup, and then provide the authentication results of UA models trained in federated setting. 

\subsection{Experimental Setup}

\noindent{\bf Dataset.} 
We evaluate the proposed FedUA framework on the VoxCeleb dataset~\citep{Nagrani2017voxceleb} which is created for large scale text-independent speaker identification in real environments. The dataset contains $1,251$ speakers' data with $45$ to $250$ number of utterances per speaker, which are generated from YouTude videos recorded in various acoustic environments. 

For training UA models, usually only few samples are collected in one setting and the same environment. Hence, we selected speakers that had at least $20$ samples from a single video, which resulted in $658$ speakers. We used the first $2$ seconds of each audio file for training and validation and used the next $2$ seconds for testing the authentication performance of the model on users who participated in training. For each user, we train the model with $15$ utterances and use the remaining $5$ utterances for validation. 
We also generated a dataset of users that we did not select for training by choosing $10$ utterances from remaining speakers and cropping their first $2$ seconds. 
All $2$-second audio files are downsampled by a factor of $2$ to obtain vectors of length $2^{14}$ for model input. 

\noindent{\bf Network architecture.} 
The network consists of three convolutional blocks, composed of convolution, relu, average pooling and Group Normalization (GN) layers, followed by fully-connected and sigmoid layers. GN is used instead of batch-normalization (BN) following the observations that BN does not work well in non-iid data setting similar to our case~\citep{hsieh2019non}. Table~(\ref{tab:model}) provides the details of the network architecture. 

\begin{table}[t]
\caption{Network architecture for UA model trained with speech data. conv1d $(C,K)$ is one-dimensional convolutional layer with $C$ output channels and kernel size of $K$, avg\_pool1d $(r)$ is one-dimensional average pooling with downsampling rate of $r$, GN $(G)$ is group normalization layer with $G$ groups, FC $(n_1,n_2)$ is fully-connected layer with $n_1$ inputs and $n_2$ outputs, and $n_e$ is the length of the embedding vector.}\label{tab:model}
\centering
\begin{tabular}{ |l|c| } 
\hline
{\bf \small Layer} & {\bf \scriptsize Output Size} \\
\hline
{\small Input} & $1\times 2^{14}$ \\
\hline
{\small conv1d $(2^6, 21)$, relu, avg\_pool1d $(2^3)$, GN $(2)$} & $2^6\times 2^{11}$ \\
\hline
{\small conv1d $(2^8, 11)$, relu, avg\_pool1d $(2^5)$, GN $(2)$} & $2^8\times 2^{6}$ \\
\hline
{\small conv1d $(2^{10}, 5)$, relu, avg\_pool1d $(2^6)$, GN $(2)$} & $2^{10}\times 1$ \\
\hline
{\small Flatten} & $2^{10}$\\
\hline
{\small FC $(2^{10}, n_e)$} & $n_e$ \\
\hline
{\small sigmoid} & $n_e$ \\
\hline
\end{tabular}
\end{table}

\noindent{\bf Training setup.} 
We train federated models with \fedavg method with $E=1$ local epoch and fraction $c=5e-3$. 
The models are trained with SGD optimizer with learning rate of $2e-3$. 
We provide experimental results with random embeddings of lengths of $n_e=128, 256$ and $512$, for which the corresponding minimum distances between embedding vectors are $39$, $93$ and $204$, respectively. 

\subsection{Authentication Results}
We provide the experimental results for models trained with random binary embeddings in the federated setting. 
The authentication performance is evaluated on different data, namely 1) training data, 2) validation data of users who participated in training, and 3) data of users who did not participate in training. 
Figure~\ref{fig:results} shows the ROC curves. 
As expected, the authentication performance is best on training data. The performance slightly degrades when the model is evaluated on validation data of users who participated in training and further reduces on data of new users. The models, however, achieve notably high TPR at very low FPRs in all case. For example, with a TPR of $80\%$, we obtained FPRs of $0.27\%, 0.18\%$ and $0.16\%$ on data of new users for $n_e=128, 256$ and $512$, respectively, implying that the model can reliably reject the data of unseen users. Also, as expected, by increasing the length of the embedding vector the performance improves.

\begin{figure*}[t]
\centering
\begin{subfigure}[b]{0.31\linewidth}
  \centering
  \includegraphics[width=\linewidth]{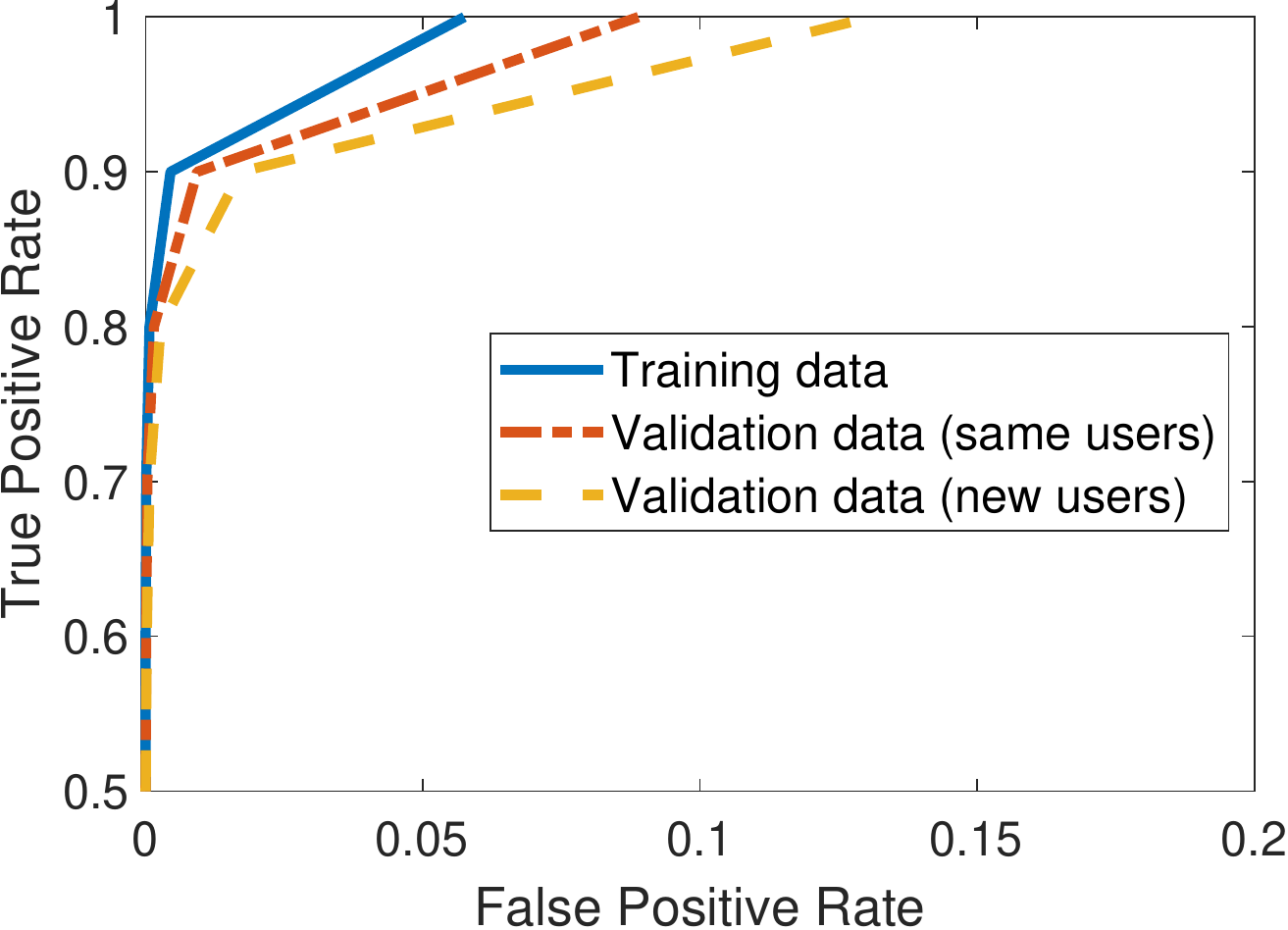}
  \caption{$n_e=128$}
 \end{subfigure}\hspace{0.3cm}
 \begin{subfigure}[b]{0.31\linewidth}
  \centering
  \includegraphics[width=\linewidth]{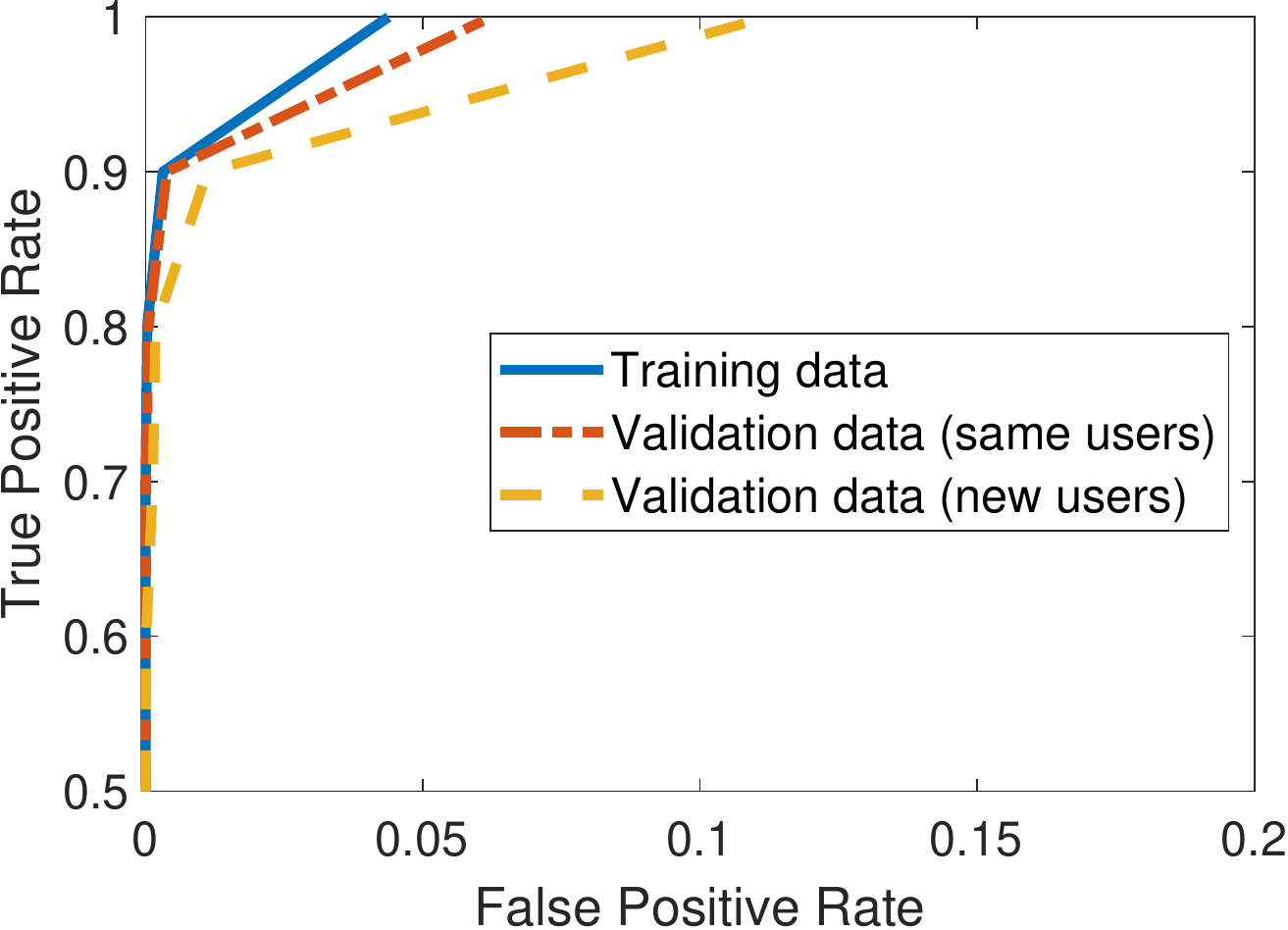}
  \caption{$n_e=256$}
 \end{subfigure}\hspace{0.3cm}
 \begin{subfigure}[b]{0.31\linewidth}
 \centering
  \includegraphics[width=\linewidth]{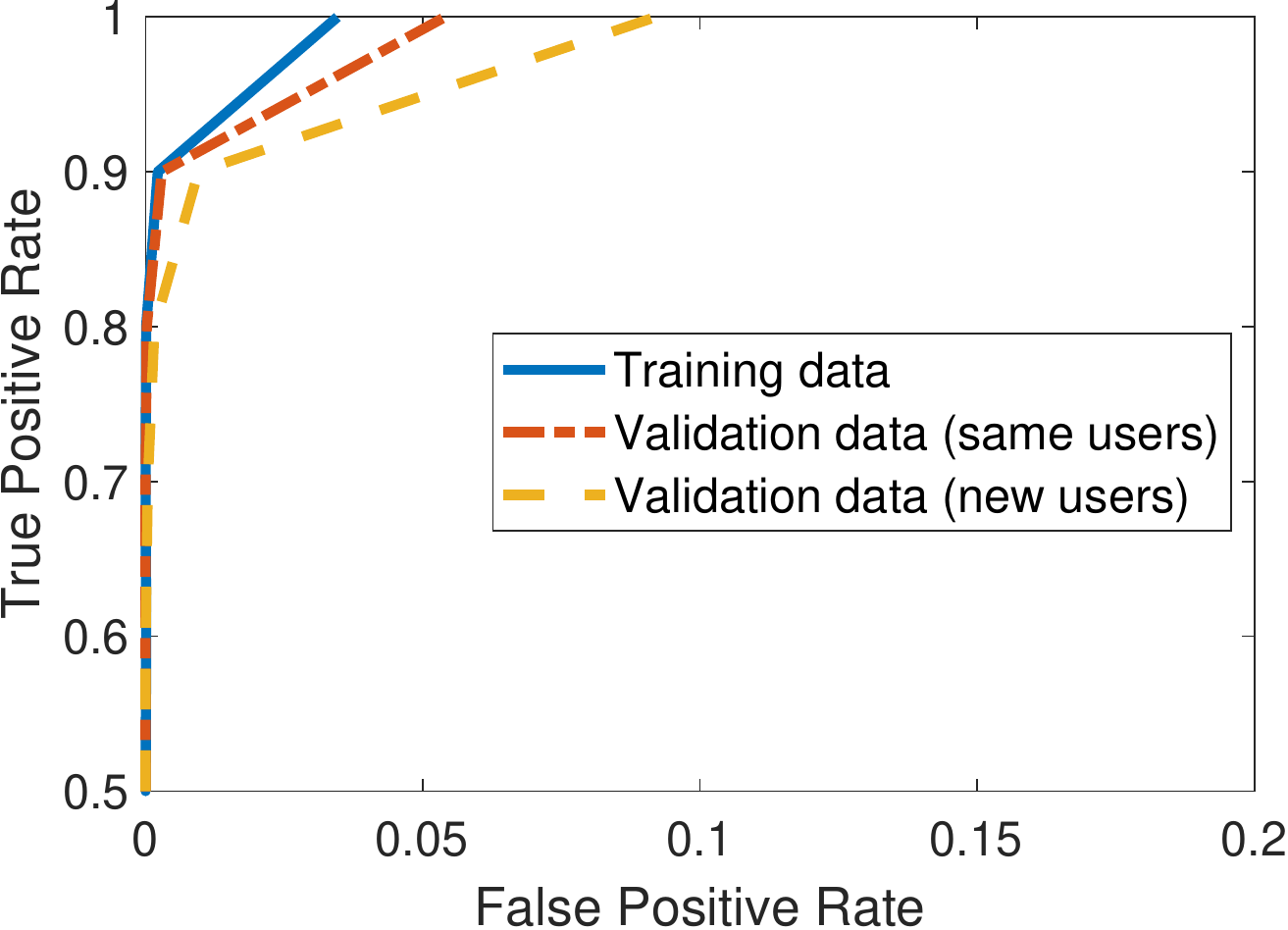}
  \caption{$n_e=512$}
 \end{subfigure}\vspace{-0.2cm}
 \caption{\small ROC curves for models trained with random binary embeddings in federated setting. The models are trained with data of $658$ users (out of $1,251$ users) of the VoxCeleb dataset~\citep{Nagrani2017voxceleb}. The figures show TPR vs FPR for embedding vectors with different lengths of $n_e=128, 256$ and $512$. The authentication performance of the models are evaluated on different data, namely 1) training data, 2) validation data of users who participated in training, and 3) data of users who did not participate in training. As can be seen, the models achieve high TPR at very low FPRs in all case. For instance, with TPR=$80\%$, we obtained FPR=$0.27\%, 0.18\%$ and $0.16\%$ on data of new users for $n_e=128, 256$ and $512$, respectively. Also, as expected, by increasing the length of the embedding vector the performance improves.}%\vspace{-0.30cm} 
\label{fig:results}
\end{figure*}

\vspace{-0.15cm}
\section{Conclusion}
In this paper, we presented \fedua, a framework for training user authentication models. The proposed framework adopts federated learning and random binary embeddings to protect the privacy of raw inputs and embedding vectors, respectively. 
We showed our method is scalable with the number of users and does not need any coordination among the users or between the users and the server, apart from the communications done usually in the FL setting. 
Our experimental results on a speaker verification dataset shows the proposed method reliably rejects data of unseen users at very high true positive rates.

The proposed approach of choosing fixed random binary vectors enables training the model with highly separable embeddings, but does not take into account the characteristics of users' data, e.g., age, gender or accent in speech inputs. In future work, we plan to extend the proposed method to adaptively update the embedding vectors during the training in a privacy-preserving way.

%%%%%%%%%%%%%%%%%%%%%%%%%%%%%%%%%%%%%%%%%%%%%%%%%%%%%%%%%%%%%%%%%%%

{\small 
\bibliography{main}
}
\bibliographystyle{icml20}

%%%%%%%%%%%%%%%%%%%%%%%%%%%%%%%%%%%%%%%%%%%%%%%%%%%%%%%%%%%%%%%%%%%

\end{document}